\documentclass[twocolumn,pre,floatfix]{revtex4}

\usepackage[T1]{fontenc}
\usepackage[utf8]{inputenc}
\usepackage{comment}
\usepackage{amsmath}
\usepackage{amsfonts}
\usepackage{amssymb}
\usepackage{mathtools}
\usepackage{graphicx}
\usepackage{booktabs}
\graphicspath{{images/}}
\usepackage{amsthm}
\newtheorem*{theorem}{Theorem}
\newtheorem*{lemma}{Lemma}

\usepackage[pdfstartview=FitH,
            breaklinks=true,
            bookmarksopen=false,
            bookmarksnumbered=true,
            colorlinks=true,
            linkcolor=black,
            citecolor=black,
            urlcolor=black,
            pdftitle={ClusterEval},
            pdfauthor={Andreas Tiffeau-Mayer},
            pdfsubject={Clustering Evaluation by the Homogeneity--Parsimony Trade-Off},
            ]{hyperref}

\begin{document}

\title{External Clustering Validation by the Homogeneity--Parsimony Trade-Off}
\author{Andreas Tiffeau-Mayer}
\affiliation{Division of Infection and Immunity \& Institute for the Physics of Living Systems, University College London}
\date{\today}

\begin{abstract}
Scalar metrics are often used to evaluate clusterings against known classes, but they can obscure a fundamental trade-off: clusterings should be informative about class labels while avoiding unnecessary fragmentation. Here we describe normalized scores of cluster homogeneity and parsimony that quantify this trade-off. These scores build on the information bottleneck principle, modified to not reward lossy compression. We show by example and mathematical proof that our definitions of these scores have the intuitive property of varying monotonically under cluster refinement in contrast to related proposals. Extending the information-theoretic framework beyond Shannon entropies, we furthermore derive set-matching and pair-based counterparts of the homogeneity and parsimony scores. These unify commonly used evaluation criteria and show that, in the pair-based setting, the homogeneity--parsimony trade-off recovers the receiver operating characteristic of binary classifiers. We demonstrate the framework's utility for feature selection and algorithm comparison, illustrating how considering scores jointly can clarify clustering operating points and identify Pareto-optimal solutions.
\end{abstract}

\maketitle

\section{Introduction}

Choosing among clustering approaches requires a principled evaluation strategy.
Sometimes objects come with an external partition into classes, and one wants the clustering to recover this class structure.
Assessing how closely a clustering reproduces a given class structure is the problem of \emph{external clustering validation} \citep{Rand1971ObjectiveCriteria,Strehl2002ClusterEnsembles,meila2003comparing}.
The classes serve as ``ground truth'' and may come from expert annotation or from orthogonal measurements not used to construct the clustering.
For instance, we might cluster patients using physiological measurements collected at the time of hospitalization.
We might also have follow-up data that partitions the same patients by clinical outcomes.
While many clusterings of the physiological data may be constructed, we might be primarily interested in those that recover the outcome groups.

In assessing clustering quality, two competing objectives must be balanced \citep{Rand1971ObjectiveCriteria,dom2002information,meila2003comparing,rosenberg2007v}. First, cluster assignments should predict the true labels: clusters should not mix objects from different classes. Second, clusters should be parsimonious: clusters should not split objects from the same class. These objectives are naturally in a tension that resembles the trade-off between specificity and sensitivity in binary classification. These criteria are typically considered jointly, e.g., through the receiver operating characteristic (ROC) or precision--recall curves. In contrast, in clustering the problem is often reduced to a single scalar score, such as the Rand index \citep{Rand1971ObjectiveCriteria}, the Fowlkes-Mallows index \citep{Fowlkes1983Method}, normalized mutual information \citep{Strehl2002ClusterEnsembles} and the related V-measure \citep{rosenberg2007v}.

Any scalar score sets a weighting between the competing objectives, either implicitly or explicitly, which may not suit the application. Corrections of clustering measures accounting for chance, such as the adjusted Rand index \citep{hubert1985comparing} and the adjusted mutual information \citep{vinh2009information}, partially address this issue. In these measures, the expected value under random clustering is subtracted from the raw metric. This provides an implicit penalty for clustering complexity, which adapts to the class structure and dataset size in a principled manner \citep{hubert1985comparing,vinh2009information,Romano2016AdjustingChance}. As with other scalar scores, however, equal chance-corrected scores might mask variation in homogeneity and parsimony, thereby limiting interpretability. Moreover, the correction itself depends on the assumed model of randomness \citep{wallace1983method}. We therefore argue for the value of explicitly encoding Occam’s razor in a two-objective framework, as a widely adopted analogue of ROC analysis remains lacking in external clustering validation.

To close this gap, we introduce normalized scores of clustering homogeneity and parsimony that span this trade-off.
Our proposal builds on the work of \citet{rosenberg2007v} and, like that earlier work, has conceptual roots in the information bottleneck (IB) approach, which seeks compressed representations that retain information about a target variable while minimizing retained information \citep{tishby1999information,Still2004HowMany,Slonim2005InformationbasedClustering,strouse2017deterministic}.
In external clustering validation, the target variable is the ground-truth class label and the clustering plays the role of the representation.
Framing the problem as one of optimizing two competing objectives makes it possible to distinguish \emph{Pareto-optimal} clusterings (no alternative has both higher homogeneity and higher parsimony) from preference-dependent choices \citep{Tan2022}.
In standard IB formulations, compression is an explicit goal, which means that underclustered solutions can appear Pareto-optimal despite collapsing distinct classes. The proposed parsimony score changes the objective by penalizing only structure not supported by the class labels (the class-conditional cluster entropy), so that clusters are made as simple as possible, but no simpler.

Beyond information-theoretic external validation, the literature also considers (i) set-matching metrics, which focus on the dominant label in each cluster or class \citep{ZhaoKarypis2001,Amigo2009ComparisonExtrinsic}, and (ii) pair-counting metrics, which treat clustering agreement as a binary classification problem on pairs \citep{Rand1971ObjectiveCriteria,Romano2016AdjustingChance}. We derive corresponding normalized homogeneity and parsimony scores for both settings by replacing Shannon entropies with alternative notions of uncertainty. In particular, normalized purity and inverse purity play the roles of homogeneity and parsimony for set matching, while specificity and sensitivity do so for pair-based evaluation. This perspective connects metrics that are often discussed separately.

The rest of this work is organized as follows. Section~\ref{sec_problem} defines the problem setting. Section~\ref{sec_framework} motivates the proposed evaluation framework. Section~\ref{sec_relations} derives the mathematical properties of these scores and compares them to other information-theoretic criteria on example problems. Section~\ref{sec_generalized_tradeoffs} extends homogeneity and parsimony scores to set matching and pair counting. Section~\ref{sec_applications} illustrates applications of the framework to feature selection and algorithm comparison. Section~\ref{sec_discussion} concludes with a discussion of limitations and future directions.

\section{Problem setting} \label{sec_problem}
Let $X = \{x_1, x_2, \dots, x_N\}$ be a set of $N$ objects. The ground-truth partition of $X$ into $L$ classes is denoted by
\begin{equation}
    \mathcal{C} = \{C_1, C_2, \dots, C_L\},
\end{equation}
where $C_i \, \cap \, C_j = \varnothing$ for all $i \neq j$ and $\bigcup_{i=1}^L C_i = X$.
Throughout this work, we assume non-trivial class structure where $1 < L < N$. We furthermore consider the standard setting where classes are considered to capture all meaningful structure, so any further subdivision represents unnecessary complexity.

A hard clustering of $X$ is a second partition of the same objects into $M$ clusters,
\begin{equation}
    \mathcal{K} = \{K_1, K_2, \dots, K_M\},
\end{equation}
where $M$ is in general unequal to $L$, $K_i \, \cap \, K_j = \varnothing$ for all $i \neq j$ and $\bigcup_{i=1}^M K_i = X$.
The external clustering validation problem is concerned with evaluating the closeness of the clustering $\mathcal{K}$ to the ground-truth classes $\mathcal{C}$.

This article focuses on information-theoretic solutions to this problem. For convenience, we represent partitions as labeling functions, i.e., $C : X \rightarrow \{1, 2, \dots, L\}$, where $C(x_i) = c$ for $x_i \in C_c$, and $K : X \rightarrow \{1, 2, \dots, M\}$, where $K(x_i) = k$ for $x_i \in K_k$.
When considering resampling with replacement of objects from $X$, the labeling functions $C$ and $K$ become random variables. Treating the labels as random variables, the joint distribution of class-cluster pairs is given by
\begin{equation}
    P(C=c, K=k) \equiv P(c, k) = \frac{n_{ck}}{N},
\end{equation}
where
\begin{equation}
    n_{ck} = \left| C_c \cap K_k \right|,
\end{equation}
is the $L \times M$ contingency table summarizing the number of objects in class $c$ that are assigned to cluster $k$.
The associated marginal distributions of class sizes, $P(C=c) \equiv P(c) = \sum_{k=1}^M P(c, k) = |C_c|/N$, and cluster sizes, $P(K=k) \equiv P(k) = \sum_{c=1}^L P(c, k) = |K_k|/N$, can be derived by summation.

\section{Clustering: A trade-off between homogeneity and parsimony} \label{sec_framework}

\subsection{Two objectives jointly determine clustering alignment with class structure}

Information theory formalizes the choice of a clustering $K$ as the simultaneous minimization of two objectives:
\begin{align}
    & \min_K\, H(C \mid K), \label{eq_minHCK} \\
    & \min_K\, H(K \mid C). \label{eq_minHKC}
\end{align}
The sum of these two objectives is the variation of information (VI) introduced by \citet{meila2003comparing} for comparing clusterings,
\begin{equation}
    \mathrm{VI}(C, K) = H(K \mid C) + H(C \mid K).
\end{equation}
VI is a metric on the space of partitions and as such symmetrically weights both objectives.  
However, external clustering validation compares a proposed clustering to the true class structure, which suggests it can be advantageous to consider asymmetric evaluation criteria \citep{wallace1983method}. The relative importance of minimizing either objective will depend on the downstream applications of the clusters. We therefore retain the two-objective formulation, before returning to possible scalarizations in the Discussion.

The first criterion (Eq.~\ref{eq_minHCK}) minimizes the conditional entropy of classes given clusters,
\begin{equation}
    H(C \mid K) = -\sum_{c=1}^{L} \sum_{k=1}^{M} P(c, k) \log P(c \mid k).
\end{equation}
$H(C \mid K)$ is minimized and is zero when each cluster contains elements from a single class. This criterion thus encourages cluster \emph{homogeneity} with respect to the reference partition.
Minimizing $H(C \mid K)$ is equivalent to maximizing the mutual information between class labels $C$ and clusters $K$,
\begin{equation}
        I(C, K) = H(C) - H(C \mid K),
\end{equation}
where
\begin{equation}
    H(C) = -\sum_{c=1}^{L} P(c) \log P(c)
\end{equation}
is the entropy of the class distribution. Eq.~\ref{eq_minHCK} can thus also be interpreted as maximizing the relevance of clusters to the given class partition.

The second criterion (Eq.~\ref{eq_minHKC}) minimizes splitting of classes across clusters as measured using the conditional entropy of clusters given classes,
\begin{equation}
H(K \mid C) = - \sum_{c=1}^{L} \sum_{k=1}^{M} P(c,k) \log P(k \mid c).
\end{equation}
This criterion encourages \emph{parsimony} by discouraging unnecessary subdivision of classes. Here and in the following, we use the term parsimony specifically to refer to avoiding cluster fragmentation, rather than to other model-complexity penalties such as those used, for instance, in the Akaike Information Criterion \citep{akaike2025akaike}.
For another way to see that Eq.~\ref{eq_minHKC} encourages parsimonious solutions, note that
\begin{equation}
    H(C, K) = H(K \mid C) + H(C).
\end{equation}
Since $H(C)$ does not depend on the clustering, minimizing $H(K \mid C)$ is equivalent to minimizing this joint entropy.
Therefore the second criterion encourages joint cluster-class assignments that can be encoded efficiently, related to the minimal description length principle \citep{rissanen1978modeling,dom2002information}. 

\subsection{Homogeneity and parsimony scores as standardized objective functions}

The main contribution of this work is to define standardized \emph{homogeneity} and \emph{parsimony} scores, which take values on the $[0, 1]$ interval. We define these scores as complements of the two conditional entropies, and normalize them by the maximum value they can take on across all possible clusterings for a given class partition $C$. The homogeneity score is identical to that introduced by Rosenberg and Hirschberg \citep{rosenberg2007v}, whereas the parsimony score is introduced here. Its relationship to the closely related completeness score of Rosenberg and Hirschberg is discussed in Section~\ref{sec_completeness}.

Homogeneity $h$ measures how well class structure is predicted by the clusters. Formally, it is defined as the complement of the normalized conditional entropy of class labels \citep{rosenberg2007v},
\begin{align}
    h(C, K) &= 1 - \frac{H(C \mid K)}{\max_{\tilde K} H(C \mid \tilde K)}, \\
      &= 1 - \frac{H(C \mid K)}{H(C)}, \label{eqdefh}
\end{align}
where we used that the cluster-conditional class entropy is maximized at $H(C)$ when all points are assigned to a single cluster.
Homogeneity is maximal ($h=1$) when all members of a cluster share the same class label, and minimal ($h=0$) when cluster assignments provide no information about the classes.

In analogy to homogeneity, we define the parsimony score as the complement of the conditional entropy of cluster labels divided by its maximum across all possible clusterings of the $N$ objects:
\begin{align}
    p(C, K) &= 1 - \frac{H(K \mid C)}{\max_{\tilde K} H(\tilde K \mid C)}, \\
      &= 1 - \frac{H(K \mid C)}{\log N - H(C)} \label{eqdefp}.
\end{align}
Conveniently, the maximum attainable value has a closed form expression, which can be computed considering the clustering where each object forms its own cluster (a clustering $\tilde{K}$ where $\tilde K(x_i) = i$). In this case, $H(\tilde{K}) = \log N$ and $H(\tilde{K}, C) = H(\tilde{K})$. This gives $H(\tilde{K} \mid C) = H(\tilde{K}, C) - H(C) = \log N - H(C)$. Note that the normalization is dependent on the total number of objects, $N$, in the dataset, reflecting the increased maximal clustering fragmentation possible in larger datasets. Parsimony is maximal ($p=1$) when all class members share an identical cluster label, and minimal ($p=0$) when the clustering fully fragments each class. 

Note that maximizing $h$ and $p$ is equivalent to minimizing Eqs.~\ref{eq_minHCK} and \ref{eq_minHKC}, because $h$ and $p$ are linear transformations of these conditional entropies with constant, negative coefficients.

\section{Properties of homogeneity, parsimony and alternative metrics} \label{sec_relations}

\subsection{Homogeneity and parsimony scores are monotonic under clustering refinement}

By our choice of definition, $h$ and $p$ are conveniently normalized to the $[0, 1]$ interval. Additionally, we demonstrate below that these scores behave in an intuitive manner when clusters are merged or split. A clustering $\mathcal{K}'$ is called a \emph{refinement} of $\mathcal{K}$ if each cluster in $\mathcal{K}'$ can be obtained by splitting clusters in $\mathcal{K}$ (equivalently, $\exists f : K = f(K')$) \citep{Meila2005ComparingClusterings}. With this definition, we can state the following theorem:

\begin{theorem}[Monotonicity under refinement]
    If $\mathcal{K}'$ is a refinement of $\mathcal{K}$, then
\begin{align}
    h(C, K') \geq h(C, K), \qquad p(C, K') \leq p(C, K).
\end{align}
\end{theorem}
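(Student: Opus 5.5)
Since $h$ and $p$ are affine functions of $H(C\mid K)$ and $H(K\mid C)$ with negative coefficients, and the normalizers $H(C)$ and $\log N - H(C)$ do not depend on the clustering, the theorem reduces to two inequalities about conditional entropies:
\begin{align}
H(C\mid K') \le H(C\mid K), \qquad H(K'\mid C) \ge H(K\mid C).
\end{align}
The normalizers are positive because $1<L<N$, so dividing by them preserves the direction of both inequalities. The only structural input is the refinement hypothesis, written as $K=f(K')$ for a deterministic map $f$. This makes $K$ a function of the random variable $K'$ under the empirical joint distribution $P(c,k')=n_{ck'}/N$.

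For homogeneity, we use that $K$ is a function of $K'$. Then $H(C\mid K') = H(C\mid K', K)$, since adjoining a deterministic function of the conditioning variable changes nothing. Conditioning reduces entropy, so $H(C\mid K',K)\le H(C\mid K)$. Equivalently, $C - K' - K$ is a Markov chain, and the data processing inequality gives $I(C,K)\le I(C,K')$. Either form yields $h(C,K')\ge h(C,K)$.

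For parsimony, we apply the chain rule to the pair $(K,K')$ conditioned on $C$:
\begin{align}
H(K'\mid C) = H(K',K\mid C) = H(K\mid C) + H(K'\mid K, C) \ge H(K\mid C).
\end{align}
The first equality holds because $K=f(K')$. The inequality holds because conditional entropy is nonnegative for discrete variables. Substituting into Eq.~\ref{eqdefp} gives $p(C,K')\le p(C,K)$.

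There is no real obstacle. The only step needing care is the translation of refinement into $K=f(K')$ for the random variables induced by resampling objects. Since each object has a single cluster label in $\mathcal{K}'$, and that cluster lies inside a unique cluster of $\mathcal{K}$, the map $f$ is well defined on every atom of the empirical distribution. One could instead verify the inequalities directly from the contingency table, using $n_{ck}=\sum_{k':f(k')=k} n_{ck'}$ together with the log-sum inequality. The information-theoretic route is shorter and is the one I would present.
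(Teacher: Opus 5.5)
Your proposal is correct and follows the paper's proof. For homogeneity, both use the data-processing inequality for $C - K' - K$, and your conditioning-reduces-entropy form is equivalent; for parsimony, you show $H(K'\mid C)\geq H(K\mid C)$, which is the paper's step $H(C,K')\geq H(C,K)$ shifted by the constant $H(C)$, and your chain-rule identity $H(K'\mid C)=H(K\mid C)+H(K'\mid K,C)$ supplies the justification the paper leaves implicit.
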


\begin{proof}
To show this, we rewrite homogeneity as a normalized mutual information,
\begin{equation}
    h(C, K) = \frac{I(C, K )}{H(C)}.
\end{equation}
Since $K = f(K')$, $K$ is conditionally independent of $C$ given $K'$ and the data-processing inequality implies $I(C, K') \geq I(C, K)$ \citep{Cover2005ElementsInformation}. Hence $h$ is non-decreasing under refinement.
    
We similarly rewrite parsimony as a min-max normalized negative joint entropy,
\begin{equation}
    p(C, K) = \frac{\log N - H(C, K)}{\log N - H(C)}.
\end{equation}
The joint entropy is non-decreasing under refinement, $H(C, K') \geq H(C, K)$. Therefore $p$ is non-increasing under refinement.
\end{proof}

\begin{figure}[tbp]
    \centering
    \includegraphics[width=\columnwidth]{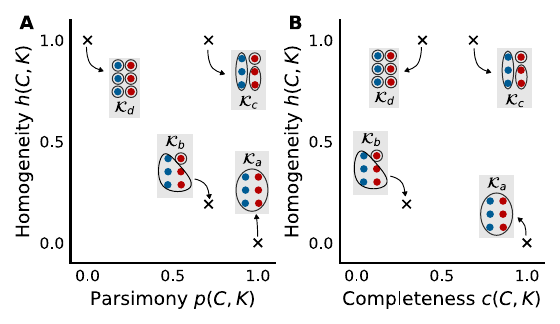}
    \caption{{\bf Comparison of parsimony and completeness scores by example.} \textbf{(A)} Parsimony--homogeneity trade-off, and \textbf{(B)} completeness--homogeneity trade-off for four clusterings $\mathcal{K}_a, \mathcal{K}_b, \mathcal{K}_c$, and $ \mathcal{K}_d$. Insets show the classes (colors) and clusterings (partitions).}
    \label{fig_completeness_example}
\end{figure}

\subsection{Comparing parsimony and completeness scores} \label{sec_completeness}

The homogeneity--parsimony scores are closely related to prior work by \citet{rosenberg2007v}. Their proposal uses the same definition of homogeneity, but pairs it with a \emph{completeness} score, defined as
\begin{equation}
    c(C, K) = 1 - \frac{H(K \mid C)}{H(K)}.
\end{equation}
Completeness, like parsimony, is maximized when each class is contained in a single cluster. However, because $H(K)$ depends on the number and balance of clusters, completeness can vary non-monotonically under refinement.

Fig.~\ref{fig_completeness_example} shows an example. Clusterings $\mathcal{K}_c$ and $\mathcal{K}_d$ are refinements of clustering $\mathcal{K}_b$ and have higher homogeneity and equal or lower parsimony, respectively. In contrast, the completeness score of the maximally fragmented clustering $\mathcal{K}_d$ is non-zero, and the completeness of clusterings $\mathcal{K}_c$ and $\mathcal{K}_d$ exceeds the value assigned to the substantially simpler clustering $\mathcal{K}_b$. The assignment of a non-zero completeness score to maximally fragmented clusterings is a general phenomenon: completeness is a cluster entropy normalized mutual information, $c(C, K) = I(C, K)/H(K)$, where for the fully fragmented clustering $\tilde K$, $I(C, \tilde K)=H(C)$ and $H(\tilde K)=\log N$. Hence $c(C,\tilde K) = H(C)/\log N$. Intuitively, as the normalization by $H(K)$ increases for fragmented clusterings, the completeness score can be overly favorable to fragmented clusterings. This problem is addressed by our definition of the parsimony score, which uses a fixed normalization independent of the clustering complexity to avoid overclustering. 

\subsection{Relation to alternative information bottleneck formulations}

\begin{figure}[tbp]
    \centering
    \includegraphics[width=\columnwidth]{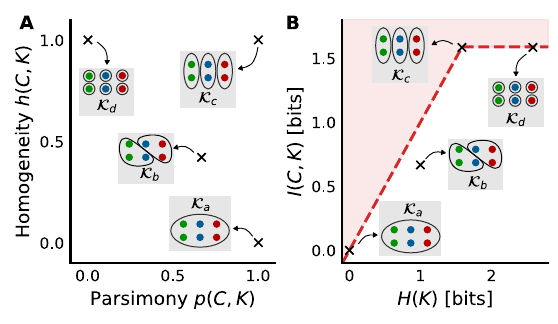}
    \caption{{\bf Comparison of homogeneity--parsimony curves to the deterministic IB trade-off.} \textbf{(A)} Parsimony--homogeneity trade-off, and \textbf{(B)} relevance--representation entropy trade-off for four clusterings $\mathcal{K}_a, \mathcal{K}_b, \mathcal{K}_c$, and $\mathcal{K}_d$. Insets show the ground-truth classes (colors) and clusterings (partitions). In panel B, the red shaded region indicates unattainable points with $I(C, K) > \min(H(K), H(C))$. Note that $H(K)$ ($= I(X, K)$ for deterministic clusterings) is to be minimized rather than maximized. }
\label{fig_parsimony_compression}
\end{figure}

As discussed, choosing a clustering represents an instance of the IB trade-off \citep{tishby1999information,Still2004HowMany,Slonim2005InformationbasedClustering}: the clustering $K$ is a compressed representation of the data $X$ that retains information about a target variable, here the ground-truth class label $C$. Eqs.~\ref{eq_minHCK},\ref{eq_minHKC} adapt the IB objectives to the external validation setting by not rewarding compression when it collapses distinct classes.

The classical IB seeks representations that maximize relevance $I(C, K)$ while compressing information about the data $I(X, K)$:
\begin{align}
    &\max_K I(C, K) \label{eqib_relevance} \\
    &\min_K I(X, K). \label{eqib_compression}
\end{align}
Maximizing relevance (Eq.~\ref{eqib_relevance}) is equivalent to maximizing homogeneity (Eq.~\ref{eq_minHCK}), as shown previously. To relate Eq.~\ref{eqib_compression} to the definition of parsimony used in clustering validation (Eq.~\ref{eq_minHKC}), we write $I(X,K)$ as
\begin{align}
    I(X, K) &= H(K) - H(K \mid X), \\
            &= H(K \mid C) + I(C, K) - H(K \mid X).
\end{align}
This decomposition reveals two terms beyond Eq.~\ref{eq_minHKC}.
As we consider deterministic hard clusterings (each $x\in X$ maps to a single cluster label), the last term is always zero \citep{strouse2017deterministic}.
The middle term, $I(C, K)$, highlights a more important difference: minimizing $H(K)$ can prefer simpler clusterings even when this simplicity is achieved by discarding relevant structure---i.e. lowering $I(C, K)$---rather than reducing $H(K \mid C)$.

Fig.~\ref{fig_parsimony_compression} shows an example. Clusterings $\mathcal{K}_a$, $\mathcal{K}_b$ have lower relevance than $\mathcal{K}_c$, but also lower $H(K)$. In the deterministic IB plane, $\mathcal{K}_a$ and $\mathcal{K}_b$ would therefore also be considered Pareto-optimal as they further compress the data. In the homogeneity--parsimony plane only clustering $\mathcal{K}_c$ is Pareto-optimal, which is precisely the clustering that perfectly recovers the classes. Replacing $H(K)$ by $H(K\mid C)$ ensures that once a clustering matches the resolution set by the reference partition (clustering $\mathcal{K}_c$), further lossy compression is not rewarded.  From the Lagrangian perspective, the parsimonious IB formulation puts a tighter bound on feasible Lagrange multipliers (see Appendix~\ref{app_pib}). Additionally, our normalized homogeneity and parsimony scores allow for an easier visualization as all values are attainable, whereas the deterministic IB plane has a non-trivial feasible region (red shaded area in Fig.~\ref{fig_parsimony_compression}B is excluded).

\section{Normalized trade-off scores for set matching and pair counting}
\label{sec_generalized_tradeoffs}

\begin{figure*}
    \centering
    \includegraphics[scale=0.8]{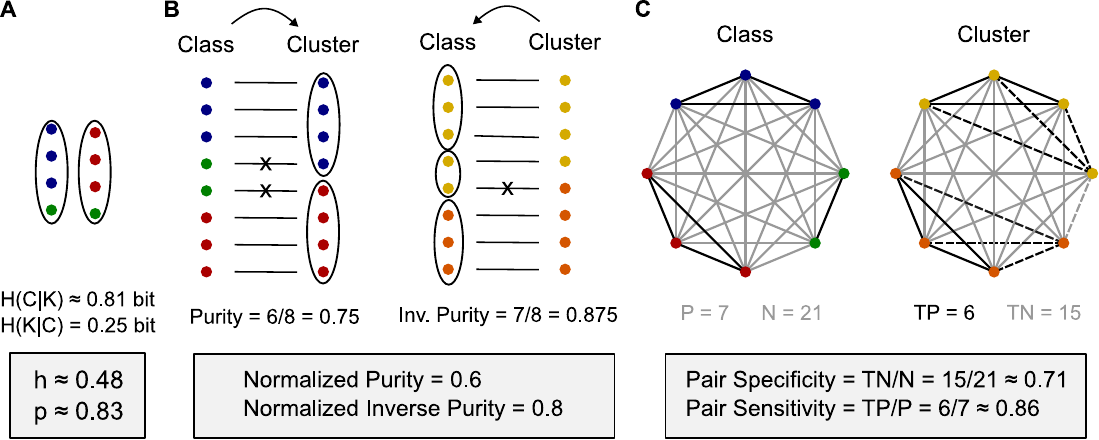}
    \caption{Illustration of three external clustering-evaluation frameworks and their corresponding normalized trade-off scores.
    \textbf{(A)} Information-theoretic (Shannon entropy), \textbf{(B)} set-matching (min entropy), and \textbf{(C)} pair-counting (collision entropy).
    In A, colors denote classes and partitions denote clusters. Conditional entropies of classes given clusters, and vice versa, yield the homogeneity $h$ and parsimony $p$ scores.
    In B, each cluster is assigned its dominant class and each class its dominant cluster, yielding normalized purity $h_\infty$ and normalized inverse purity $p_\infty$.
    In C, partitions are represented as a binary classification problem on pairs of objects, with positive labels for pairs in the same grouping (dark lines) and negative labels for pairs in different groupings (light lines). Pairwise disagreements are shown as dashed lines in the right panel, and the corresponding normalized scores are specificity $h_2$ and sensitivity $p_2$.
    }
    \label{fig_evaluation_approaches}
\end{figure*}

\subsection{Overview of alternative evaluation frameworks}

In addition to information-theoretic evaluation (Fig.~\ref{fig_evaluation_approaches}A), external clustering validation is also commonly framed in set matching or pair-classification terms. Set-matching scores, such as purity or inverse purity, assign each cluster or class to its dominant label \citep{ZhaoKarypis2001,Amigo2009ComparisonExtrinsic} (Fig.~\ref{fig_evaluation_approaches}B). Pair-counting metrics, such as the Rand Index \citep{Rand1971ObjectiveCriteria}, treat clustering-class agreement as a binary classification problem on pairs of objects (Fig.~\ref{fig_evaluation_approaches}C).

Although these evaluation strategies are usually presented separately, in the following we review how each can be derived from an information-theoretic measure. This perspective leads to analogues of the normalized homogeneity and parsimony scores for each framework.
To give intuition, recall the definition of the R\'{e}nyi entropy of order $\alpha$,
\begin{equation}
    H_{\alpha}(Z) = \frac{1}{1-\alpha}\log \sum_{z} P(z)^{\alpha},
\end{equation}
which reduces to the classical Shannon entropy in the limit $\alpha\to 1$.
As $\alpha$ increases, these generalized entropies place more weight on the dominant clusters (and classes), while smaller values of $\alpha$ emphasize the tail of the distribution. Large $\alpha$ can be useful when homogeneity and parsimony should be judged primarily by the largest groups, while smaller $\alpha$ gives a more even account of clustering alignment. Set-matching scores are connected to a conditional min-entropy construction based on the $\alpha \to \infty$ limit. Similarly, pair-counting measures are related to the $\alpha = 2$ case \citep{Romano2016AdjustingChance}, also known as collision entropy. In this way, the three frameworks emerge as related constructions motivated by generalized entropies.

Unlike Shannon entropy, generalized entropies can be defined in multiple ways, each preserving distinct algebraic or operational properties \citep{renyi1961measures,tsallis1988possible,furuichi2006information}. Rather than employing a universal definition, we adopt in each case the formulation that most directly recovers the evaluation criteria commonly used in the respective frameworks. Fig.~\ref{fig_evaluation_approaches} summarizes the resulting pair of normalized scores.

\subsection{Set matching: Purity and Inverse Purity}

To link the definitions of homogeneity and parsimony to set-matching scores, we introduce a conditional min-entropy variant based on $H_\infty$,
\begin{equation}
    H_{\infty}(C\mid K) = -\log \sum_{k} P(k) \max_{c} P(c\mid k).
\end{equation}
where we adopt an arithmetic averaging convention rather than an escort-based average \citep{amari2011geometry}. This particular conditional entropy definition is chosen because it links the negative exponential of the conditional entropies (Eqs.~\ref{eq_minHCK} and \ref{eq_minHKC}) to the so-called \emph{purity} score and its counterpart, \emph{inverse purity} \citep{ZhaoKarypis2001,Amigo2009ComparisonExtrinsic},
\begin{align}
    e^{-H_{\infty}(C\mid K)} &= \sum_{k=1}^{M} P(k) \max_{c} P(c\mid k) \equiv \mathrm{purity}(C, K), \\
    e^{-H_{\infty}(K\mid C)} &= \sum_{c=1}^{L} P(c) \max_{k} P(k\mid c) \equiv \mathrm{inv\text{-}purity}(C, K).
\end{align}
Switching from random variable to set notation recovers the commonly used definitions of purity and inverse purity
\begin{align}
    \mathrm{purity}(C, K) &= \frac{1}{N} \sum_{k=1}^M \max_c \left| C_c \cap K_k \right|, \\
    \mathrm{inv\text{-}purity}(C, K) &= \frac{1}{N} \sum_{c=1}^L \max_k \left| C_c \cap K_k \right|.
\end{align}
The minimization of this conditional min-entropy variant is thus equivalent to maximizing these scores.

In analogy to the Shannon-based derivations, we next define normalized homogeneity and parsimony scores. Purity and inverse purity are probabilities and thus already bounded between 0 and 1. However, as shown in Appendix~\ref{app_purity_extrema}, the minimum attainable values of these scores are not zero. Instead, purity is bounded below by $\max_c P(c)$, and inverse purity by $L/N$, where we recall that $L = |\mathcal{C}|$ is the support of $C$. Normalizing purity and inverse purity to the attainable range of values yields the following homogeneity and parsimony scores for set matching:
\begin{align}
    h_{\infty}(C, K) &= \frac{\mathrm{purity}(C, K) - \max_c P(c)}{1-\max_c P(c)} \\
    p_{\infty}(C, K) &= \frac{\mathrm{inv\text{-}purity}(C, K) - L/N}{1-L/N}.
\end{align}
The normalization addresses the inflation of raw purity scores in imbalanced datasets, where even a trivial clustering can score highly by predicting the majority class.
Note, though, that as with other set-matching metrics, the normalized scores depend only on the dominant label within each conditioning set. This can lead to what has been termed the problem of matching \citep{Amigo2009ComparisonExtrinsic}, in which improvements affecting minority classes are not reflected in the score. In contrast, the Shannon-based homogeneity and parsimony scores are sensitive to all labels and thus avoid this problem.

\subsection{Pair counting: false positives and false negatives}
\label{sec_paircounting}

Pair counting treats the clustering problem as a binary classification problem on pairs of objects, where the positive label is ``same class'' and the predicted label is ``same cluster''. Agreement can be summarized by two numbers: $\mathrm{FP}$, the number of pairs in the same cluster but different classes (false positives); and $\mathrm{FN}$, the number of pairs in different clusters but same classes (false negatives).
The class and cluster distribution fixes the total number of positive and negative pairs, and thus the numbers of true positive and true negative pairs. Together, these numbers determine many clustering metrics, such as the Rand index \citep{Rand1971ObjectiveCriteria}, Fowlkes-Mallows index \citep{Fowlkes1983Method}, or adjusted Rand index \citep{hubert1985comparing}. 

To relate FP and FN to the conditional entropies of Eqs.~\ref{eq_minHCK},\ref{eq_minHKC}, we make use of a different family of generalized entropies, due to Tsallis \citep{tsallis1988possible,furuichi2006information,Romano2016AdjustingChance}. In the following we sketch the derivations, and refer the reader to Appendix~\ref{app_collision_entropy} for details.

The Tsallis collision entropy of a discrete random variable $X$ is defined as \citep{havrda1967quantification,tsallis1988possible,furuichi2006information} 
\begin{equation}
    T_2(X) = 1 - \sum_{x} P(x)^2.
\end{equation}
These entropies can be estimated from sample data using an unbiased estimator due to Simpson \citep{simpson1949measurement,tiffeau2024unbiased},
\begin{equation}
    \widehat{T_2}(X) = 1 - \sum_{x} \frac{n_x(n_x-1)}{N(N-1)},
\end{equation}
where $n_x$ is the number of occurrences of $x$ in a sample of size $N$.
Using these definitions, the unbiased empirical estimates of the conditional entropies can be expressed exactly in terms of pair-counting statistics,
\begin{align}
    \binom{N}{2} \widehat{T_2}(C \mid K) &= \mathrm{FP} \\
    \binom{N}{2} \widehat{T_2}(K \mid C) &= \mathrm{FN}.
\end{align}

Pair-based homogeneity and parsimony scores can subsequently be defined using the same normalized-complement formulas as before. The normalized scores reduce exactly to the familiar metrics of specificity (TNR) and sensitivity (TPR) from binary classification:
\begin{align}
    h_2(C, K) &= 1- \frac{\mathrm{FP}}{\max_{\tilde K} \mathrm{FP}} \equiv \mathrm{TNR}, \\
    p_2(C, K) &= 1- \frac{\mathrm{FN}}{\max_{\tilde K} \mathrm{FN}} \equiv \mathrm{TPR},
\end{align}
The maximum is again taken over all possible clusterings $\tilde K$.
The number of false positives $\mathrm{FP}$ is upper-bounded by the total number of negative pairs, and the number of false negatives $\mathrm{FN}$ by the total number of positive pairs.
Therefore, the scores are equivalent to the true negative rate (TNR) and true positive rate (TPR) of the binary classifier, respectively. Homogeneity aligns with specificity (TNR) because heterogeneous clusters create false positives among the pairs predicted to be in the same cluster. Conversely, parsimony aligns with sensitivity (TPR) because splitting classes across clusters creates false negatives among the pairs predicted to be in different clusters.
In binary classification, it is more customary to plot the true positive rate (TPR) against the false positive rate (FPR), where $\mathrm{FPR} = 1-\mathrm{TNR}$. Except for this choice of axes, homogeneity--parsimony curves are direct analogues of ROC.

\begin{figure*}[t]
    \centering
    \includegraphics{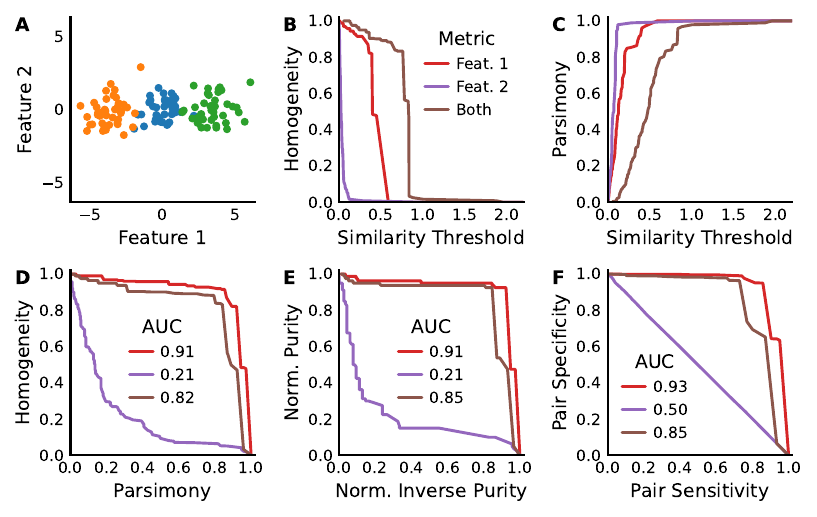}
    \caption{{\bf Trade-off curves support feature selection.} \textbf{(A)} Two labeled classes in a two-dimensional feature space, which are separated by an offset in feature~1. Single linkage agglomerative clustering is applied using either both features, feature~1 alone, or feature~2 alone. \textbf{(B)} Homogeneity $h(C, K)$ and \textbf{(C)} Parsimony $p(C, K)$ as the distance threshold $\theta$ is varied across merge points. \textbf{(D)} Homogeneity $h(C, K)$ vs. Parsimony $p(C, K)$. \textbf{(E)} Normalized purity $h_\infty(C, K)$ vs. normalized inverse purity $p_\infty(C, K)$. \textbf{(F)} Specificity $h_2(C, K)$ vs. sensitivity $p_2(C, K)$ of the induced binary classifier on pairs. In E-F the legend reports the area under curve (AUC) for each feature choice.
    }
    \label{fig_feature_selection}
\end{figure*}

\section{Applications} \label{sec_applications}

\subsection{Feature selection in agglomerative clustering}

The selection of informative features \citep{witten2010framework} or distance metrics \citep{jaskowiak2014selection} can align clusters with the desired class structure. To illustrate how our trade-off formulation can be used for graphical feature selection, we constructed a simple synthetic dataset. We drew $40$ samples from a bivariate standard normal distribution for each of three classes with class means at $(-3.5, 0)$, $(0, 0)$, and $(3.5, 0)$ (Fig.~\ref{fig_feature_selection}A). We then applied single linkage agglomerative clustering to the data. Single linkage clustering progressively merges clusters when their minimum inter-cluster distance falls below a threshold $\theta$. The problem setup implies that only the first dimension is informative for separating classes, and we thus compared the homogeneity--parsimony trade-off curves for clusterings using both features to define a distance between objects with those using either feature alone.

The results show why a direct comparison at a fixed threshold $\theta$ can be misleading (Fig.~\ref{fig_feature_selection}B, C): maximizing homogeneity at a fixed similarity threshold $\theta$ favors using both features (Fig.~\ref{fig_feature_selection}B), but would also lead to lower parsimony (Fig.~\ref{fig_feature_selection}C). For a mathematically controlled comparison, it is necessary instead to compare the homogeneity achievable with equally parsimonious clusterings. This is achieved by the homogeneity--parsimony curve traced out as $\theta$ varies (Fig.~\ref{fig_feature_selection}D). At equal parsimony, homogeneity is typically higher when relying on feature 1 alone rather than using both features. In other words, using feature 1 is Pareto optimal, as expected in this problem.

The same conclusion is reached with the set-matching (Fig.~\ref{fig_feature_selection}E) and pair-counting (Fig.~\ref{fig_feature_selection}F) formulations of the trade-off, demonstrating robustness to the choice of entropy measure. The pair-based curve is particularly simple for the uninformative feature 2, and follows the diagonal, as expected for the ROC of a random binary classifier. Such a simple baseline is not available for the two other pairs of scores. The area under the curve (AUC) of the homogeneity--parsimony trade-off, which serves as a scalar summary of performance across all thresholds, quantifies these observations. It also highlights numerical differences between the different evaluation criteria, as might be expected from the different weighting they assign to the correctness of more dominant or rare groupings. In summary, the homogeneity--parsimony trade-off provides a simple and intuitive graphical tool for feature selection.

\begin{figure}
    \centering
    \includegraphics[scale=0.9]{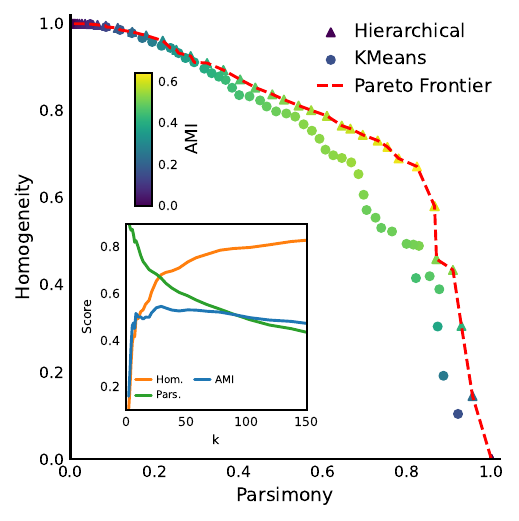}
    \caption{{\bf Pareto-optimal clustering of MNIST.} Homogeneity--parsimony curves for k-means and agglomerative clustering of the MNIST dataset.
    For k-means, the number of clusters $k$ was varied over $50$ logarithmically spaced values between $2$ and $2000$.
    For agglomerative clustering, the distance threshold for Ward's linkage criterion was varied over $50$ logarithmically spaced values between $1$ and $200$.
    Points are colored by Adjusted Mutual Information (AMI).
    The inset shows the homogeneity, parsimony, and AMI achieved by k-means clustering as a function of the number of clusters $k$.
    For computational simplicity, we randomly subsampled $2000$ images from the MNIST dataset and used the first $50$ principal components of grayscale pixel values as input to the clustering algorithms. 
    }
    \label{fig_mnist_comparison}
\end{figure}

\subsection{Algorithm comparison on MNIST}

A second illustration of how homogeneity--parsimony curves might be used is shown in Fig.~\ref{fig_mnist_comparison}. Here, we compared clustering performance on the classic MNIST dataset \citep{lecun1998gradient}, which contains small grayscale images of handwritten digits 0 to 9. To systematically obtain clusterings of varying complexity, we applied k-means \citep{steinhaus1956division} and agglomerative clustering using Ward's linkage \citep{ward1963hierarchical}. In agglomerative clustering, we varied the distance threshold for merging clusters, as in the previous subsection. In k-means, the number of clusters $k$ is controlled by a user-defined parameter. The homogeneity--parsimony analysis demonstrates that agglomerative clustering outperforms k-means on this dataset, achieving substantially higher homogeneity at intermediate parsimony levels. The figure again shows how clustering homogeneity alone can be a misleading comparison criterion: k-means can achieve higher homogeneity than agglomerative clustering depending on the parameter settings of both algorithms, but only at the cost of lower parsimony.

For comparison, we also evaluated clusterings using the Adjusted Mutual Information (AMI). The maximum AMI values are higher for agglomerative clusterings, consistent with the Pareto-optimality implied by the homogeneity--parsimony analysis. The inset of Fig.~\ref{fig_mnist_comparison} shows that AMI is comparatively insensitive near its optimum for k-means, remaining close to $\sim 0.5$ for $k \sim 8$--$100$. Over the same range, homogeneity increases markedly while parsimony decreases. This illustrates the complementary insight gained by examining the homogeneity--parsimony trade-off.

\section{Discussion} \label{sec_discussion}

External clustering validation should fundamentally be viewed as a Pareto optimization problem between homogeneity and parsimony. To support this perspective, we have defined pairs of scores for external clustering evaluation that, while closely related to existing proposals \citep{tishby1999information,meila2003comparing,rosenberg2007v}, ease assessing Pareto optimality. In particular, our homogeneity and parsimony scores behave monotonically under refinement, and the full range from 0 to 1 is attainable for clusterings of a given class distribution. We furthermore have shown how related set-matching and pair-based scores can be derived in a common information-theoretic language using generalized entropies. Taken together, these contributions might further adoption of the Pareto optimization perspective in clustering evaluation.

Establishing exact correspondences with commonly used scores in the alternative evaluation frameworks required different definitions of generalized conditional entropy and, in the pair-counting setting, an unbiased estimator of collision entropy rather than plug-in estimators was used. For pair-based metrics, the correspondence generalizes earlier results linking the VI metric to the Rand index \citep{Romano2016AdjustingChance}, while the connection between set-matching scores and min-entropy appears to be less widely known. Our derivations raise the broader question of whether alternative generalized entropy definitions \citep{renyi1961measures,tsallis1988possible,furuichi2006information} or bias-corrected entropy estimators \citep{nemenman2001entropy,grassberger2003entropy} could yield other useful clustering evaluation criteria. We leave a systematic investigation of these questions to future work.

We have restricted our attention to the validation of hard clusterings against a gold standard reference partition. Extending the framework to probabilistic or fuzzy clusterings (including overlapping cluster assignments) is an important direction for future work \citep{bezdek2013pattern,lancichinetti2009benchmarks}. Our parsimony objective is tailored to label recovery settings, where fragmentation beyond the reference classes is viewed as undesirable. In other applications, the goal may instead be to discover meaningful substructure within a class (for instance, disease subtypes within a clinically defined cohort), or to group classes with insufficient functional distinction. In these cases, other evaluation criteria might be necessary. Related discussions appear in the literature on cluster stability and model selection \citep{benhur2001stability,tibshirani2001estimating}. Finally, when external labels are unavailable, clustering selection must rely on internal validation criteria, such as the Silhouette coefficient \citep{rousseeuw1987silhouettes} or the Davies--Bouldin index \citep{davies1979cluster}.

An important conceptual choice in our approach is to define scores that have a fixed normalization for a given partition of objects into classes, independent of the specific clustering. This allows for a fairer comparison across clusterings than alternative normalizations, which also depend on the clustering. However, some care must be taken when comparing scores across datasets of different sizes. The parsimony score depends not only on the effective number of classes, but also on the total number of objects. It thus increases with the number of objects, even if the class distribution remains unchanged. This reflects that the score measures the proportion of fragmentation relative to the maximally fragmented clustering, which inherently increases with dataset size.

While this paper argues for the benefits of a two-objective formulation, it has so far provided little practical guidance on how to choose a single clustering from the Pareto frontier. In Appendix~\ref{app_scalarizations}, we introduce the Q-measure as a weighted harmonic mean of homogeneity and parsimony to provide a scalar clustering quality measure. This measure has some advantages over the related V-measure \citep{rosenberg2007v}, but the problem of choosing the weight parameter remains. Ultimately, the specific application dictates what level of homogeneity is required, and how much parsimony can be sacrificed to achieve it. The contributions of the two-objective framework are therefore twofold: it uses Pareto optimality to prune candidate clusterings in a principled manner, and it offers the trade-off curve as a visual tool to support the final, application-dependent decision.

An important open problem is to compare the parsimony criterion systematically with other complexity penalties. In Section~\ref{sec_applications}, we used a simple example problem to compare the two-objective framework with adjusted mutual information, which down-weights the relevance of a clustering by subtracting the mutual information expected under a hypergeometric model of randomness \citep{vinh2009information,Romano2016AdjustingChance}. This expected value increases with the number of clusters and thus provides an implicit parsimony correction. Reduced mutual information, by contrast, penalizes non-parsimonious solutions by adding a term proportional to the description length of the contingency table \citep{newman2020improved}. This adjustment, as well as an earlier precursor by Dom \citep{dom2002information}, is also conceptually related to the parsimony criterion, since contingency tables for simpler clusterings can be expressed more succinctly. Developing a mathematical theory that unifies these complexity penalties would be a worthwhile goal for future work.

For simplicity of exposition we have relied on simple toy problems to accompany the theoretical derivations. In ongoing research, we have applied this framework to benchmark clustering algorithms for immunological data, extending our prior work in this area \citep{turner2026evolution}. This will be the subject of a separate publication. More generally, we hope that the homogeneity--parsimony trade-off will be a broadly useful addition to the clustering evaluation toolbox across the myriad real-world applications of cluster analysis. To aid adoption ``in the wild'', we have made a reference implementation for computing these scores available as an open-source Python package \citep{github_repo}.

\vspace{5pt}

\textbf{Acknowledgements.} The author thanks Rishika Saxena for uncovering the unexpected behavior of the completeness score under refinement in a clustering application setting, Mahdad Noursadeghi for tirelessly advocating for a ROC-style clustering evaluation, and James Henderson for discussion of non-Shannon entropy measures. Claude Sonnet 4.6 (Anthropic) and ChatGPT 5.4 (OpenAI) were used for coding assistance and copyediting with all outputs fully reviewed by the author. Feedback on the draft manuscript from Reemon Spector, Chris Watkins, Lucia Guillamet Garcia, and other members of the Q-Immuno lab is gratefully acknowledged, as well as financial support of the Wellcome Trust (306550/Z/23/Z). 
\bibliographystyle{apsrev4-2}
\bibliography{references.bib}

\appendix

\section{The parsimonious information bottleneck Lagrangian} \label{app_pib}

In the following, we compare the Lagrangian formulation of the Parsimonious Information Bottleneck (PIB), which penalizes $H(K \mid C)$, with that of the Deterministic IB (DIB).
Recall that any multi-objective optimization problem can be recast as a scalar optimization problem with constraints on the remaining objectives. In external clustering validation, we can minimize fragmentation $H(K \mid C)$ among all solutions with constant inhomogeneity $H(C \mid K)$. This leads to the parsimonious information bottleneck (PIB) Lagrangian,
\begin{equation}
    \mathcal{L}_{\mathrm{PIB}}(\beta) = H(K \mid C) + \beta_{\mathrm{PIB}} H(C \mid K),
\end{equation}
where $\beta_{\mathrm{PIB}} \geq 0$ is a Lagrange multiplier.
Compared to the DIB Lagrangian \citep{strouse2017deterministic},
\begin{equation}
    \mathcal{L}_{\mathrm{DIB}} = H(K) - \beta_{\mathrm{DIB}} I(C, K),
\end{equation}
the PIB Lagrangian puts a tighter bound on feasible values of $\beta$ as
\begin{equation}
    \beta_{\mathrm{DIB}} = \beta_{\mathrm{PIB}} + 1.
\end{equation}
This relationship suggests that the DIB criterion might favor underclustering when $\beta_{\mathrm{DIB}} \leq 1$.
We note that at $\beta_{\mathrm{PIB}} = 1$ the PIB Lagrangian equals the $\mathrm{VI}$ measure \citep{meila2003comparing}. The PIB Lagrangian thus represents a one-parameter generalization of the variation of information that allows unequal weights.

\section{Extrema of purity and inverse purity} \label{app_purity_extrema}

Both purity and inverse purity attain maximal values of $1$ for the perfect clustering in which each class is contained in a single cluster.

\begin{lemma}
The minimal attainable value of $\mathrm{purity}(C, K)$ for any clustering is $\max_c P(c)$.
\end{lemma}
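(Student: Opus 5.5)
The argument has two parts: a lower bound valid for every clustering, and a clustering that attains it.

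For the lower bound, I start from the set-notation form of purity,
\[
\mathrm{purity}(C, K) = \frac{1}{N} \sum_{k=1}^M \max_c \left| C_c \cap K_k \right|,
\]
and fix a majority class $c^\ast \in \arg\max_c P(c)$. For every cluster $K_k$,
\[
\max_c |C_c \cap K_k| \geq |C_{c^\ast} \cap K_k|.
\]
Summing over $k$ and using that the clusters partition $X$ gives
\[
\sum_k |C_{c^\ast} \cap K_k| = |C_{c^\ast}|.
\]
Dividing by $N$ yields $\mathrm{purity}(C,K) \geq |C_{c^\ast}|/N = \max_c P(c)$ for every clustering $K$.

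The same bound can be phrased in the random-variable form. Purity equals $\sum_k P(k)\max_c P(c\mid k)$. This is at least $\sum_k P(k) P(c^\ast \mid k)$, which by the law of total probability equals $P(c^\ast)$. This is the min-entropy analogue of conditioning reducing entropy, $H_\infty(C\mid K) \leq H_\infty(C)$. I would mention this form as the conceptual reason the bound holds.

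For attainment, I would exhibit the trivial clustering with $M=1$, where all objects sit in one cluster. Its purity is $\frac{1}{N}\max_c |C_c| = \max_c P(c)$, so the bound is achieved and is therefore the minimum. It is also worth noting that any clustering in which every cluster has the same majority class $c^\ast$ attains the same value, although this is not needed for the proof.

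I expect no step to be a real obstacle. The one point needing care is the inequality $\max_c |C_c\cap K_k| \geq |C_{c^\ast}\cap K_k|$, which must use a single fixed $c^\ast$ for all clusters rather than the per-cluster maximizer; that choice is what makes the sum telescope to $|C_{c^\ast}|$.
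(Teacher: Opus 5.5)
Your proposal is correct and follows essentially the paper's own argument. The paper fixes the largest class $c^\star$, bounds $\max_c P(c\mid k) \geq P(c^\star\mid k)$ in each cluster, sums with the law of total probability to obtain $P(c^\star)$, and notes that the single-cluster partition attains the bound; your set-notation version and your probability-form remark are just two phrasings of this.
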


Let $c^\star$ be the index of the largest class. The baseline probability of drawing from this class is
\begin{equation}
    P(c^\star) = \max_c P(c).
\end{equation}
For any cluster $k$, the maximum conditional probability is bounded by the conditional probability of the largest class,
\begin{equation}
    \max_c P(c\mid k) \geq P(c^\star \mid k)
\end{equation}
Summing over all clusters $k$ gives
\begin{equation}
    \sum_k P(k)\max_c P(c\mid k) \geq \sum_k P(k) P(c^\star \mid k).
\end{equation}
By the law of total probability, the right-hand side equals $P(c^\star)$, giving
\begin{equation}
    \mathrm{purity}(C, K) \geq P(c^\star) = \max_c P(c).
\end{equation}
This minimum is attained by the single-cluster partition.

\begin{lemma}
The minimal attainable value of $\mathrm{inv\text{-}purity}(C, K)$ is $|\mathcal{C}|/N$.
\end{lemma}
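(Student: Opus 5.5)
The plan mirrors the preceding lemma for purity: first a lower bound valid for every clustering, then a clustering that attains it. Write inverse purity in set notation,
\begin{equation}
    \mathrm{inv\text{-}purity}(C, K) = \frac{1}{N} \sum_{c=1}^{L} \max_k \left| C_c \cap K_k \right|.
\end{equation}
The bound follows term by term. Each class $C_c$ is nonempty, since $\mathcal{C}$ is a partition into $L$ classes. The clusters partition $X$, so $C_c$ is covered by the sets $C_c \cap K_k$, $k=1,\dots,M$, and at least one of these is nonempty. Hence
\begin{equation}
    \max_k \left| C_c \cap K_k \right| \geq 1 \quad \text{for every } c.
\end{equation}
Summing over the $L$ classes gives $\mathrm{inv\text{-}purity}(C,K) \geq L/N = |\mathcal{C}|/N$.

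Attainment uses the fully fragmented clustering $\tilde K(x_i) = i$ already introduced in the normalization of parsimony. Every cluster is then a singleton, so $|C_c \cap \tilde K_k| \in \{0,1\}$ for all $c,k$, and for each class at least one such term equals $1$. Therefore $\max_k |C_c \cap \tilde K_k| = 1$ for every $c$, and $\mathrm{inv\text{-}purity}(C,\tilde K) = L/N$.

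An equivalent formulation, in the probabilistic notation of the purity lemma, is that $\max_k P(k\mid c) \geq 1/|C_c|$, with equality for singletons. This gives $\sum_c P(c)\max_k P(k \mid c) \geq \sum_c \frac{|C_c|}{N}\cdot\frac{1}{|C_c|} = L/N$.

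No step is a real obstacle. The only care needed is in using integrality: the bound rests on intersection sizes being integers, so any nonempty intersection has size at least $1$. It also needs every class to be nonempty, which holds by the definition of $\mathcal{C}$ as a partition into $L$ classes. Unlike the purity lemma, the minimizer is the finest clustering rather than the coarsest. This matches the parsimony normalization, where $p = 0$ is attained at maximal fragmentation.
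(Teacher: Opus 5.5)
Your proof is correct, but it takes a different route from the paper's.

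\textbf{The paper's route.} The paper argues through the refinement order. It asserts, without proof, that inverse purity is non-increasing under refinement. Since the fully fragmented clustering refines every clustering, it must attain the minimum, and the paper then evaluates it as $\sum_c (n_c/N)(1/n_c) = L/N$.

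\textbf{Your route.} You prove the lower bound directly for every clustering. Classes are non-empty and intersection sizes are integers, so $\max_k |C_c \cap K_k| \geq 1$. Only after that do you exhibit the singleton clustering as attaining the bound.

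\textbf{What each buys.} Your argument is fully self-contained. The paper's rests on a monotonicity claim that it states but does not verify. That claim is easy to check: if $\mathcal{K}'$ refines $\mathcal{K}$, each $K'_{k'}$ lies inside some $K_k$, so $\max_{k'} |C_c \cap K'_{k'}| \leq \max_k |C_c \cap K_k|$.

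Your bound also gives more for free. Equality holds exactly when no cluster contains two objects of the same class. So the fully fragmented clustering is only one of many minimizers, which the monotonicity argument does not reveal.

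The paper's approach, in turn, links the result to the refinement structure used elsewhere in the paper, namely the monotonicity theorem for $h$ and $p$ and the parsimony normalization. It also explains conceptually why the minimum sits at maximal fragmentation.
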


To show this, we will make use of the fact that inverse purity is monotonic under refinement of the clustering. In the present finite-sample hard-partition setting, the minimal value is therefore attained by the fully fragmented clustering in which each object forms its own cluster. For such a clustering, for each class $c$, the largest cluster-conditional probability is $1/n_c$, giving
\begin{equation}
    \mathrm{inv\text{-}purity}(C, K)
    = \sum_c \frac{n_c}{N}\frac{1}{n_c}
    = \frac{|\mathcal C|}{N}.
\end{equation}

\section{Relation of pair-counting and collision entropies} \label{app_collision_entropy}

\subsection{Tsallis entropy definition}

The Tsallis $q$-entropy \citep{havrda1967quantification,tsallis1988possible,furuichi2006information} is defined as
\begin{equation}
    T_{\alpha}(Z) = \frac{1}{\alpha-1}\left(1 - \sum_{z} P(z)^\alpha\right).
\end{equation}
It is a monotone transformation of the R\'{e}nyi entropy
\begin{equation}
    T_\alpha(Z) = \frac{1-\exp\bigl((1-\alpha)H_\alpha(Z)\bigr)}{\alpha-1},
\end{equation}
and as such an alternative one-parameter generalization of the Shannon entropy.
At order $\alpha=2$, the Tsallis entropy is equal to
\begin{equation}
    T_2(Z) = 1-\sum_{z} P(z)^2,
\end{equation}
which is also called \emph{Gini-Simpson diversity index}.
Following Furuichi \citep{furuichi2006information}, we define conditional Tsallis entropies using 
\begin{equation}
    T_{\alpha}(X \mid Y) = \sum_y p(y)^{\alpha} T_{\alpha}(X\mid y),
\end{equation}
to maintain the chain rule 
\begin{equation}
    T_{\alpha}(C\mid K) = T_{\alpha}(C, K) - T_{\alpha}(K).
\end{equation}

\subsection{Coincidence probabilities determine pair counts}

The Tsallis collision entropy is the complement of the coincidence probability $\kappa(Z)$,
\begin{equation}
    \kappa(Z) = \sum_{z} P(z)^2 = 1-T_2(Z).
\end{equation}
This quantity measures the probability that two independent draws from the distribution are identical. $\kappa$ is also known as Simpson's index in ecology \citep{simpson1949measurement}. The associated R\'{e}nyi entropy of order $2$ (also called \emph{collision entropy}) is related to $\kappa$ via $H_2(Z) = -\log \kappa(Z)$.

If we have a finite sample from $Z$ of size $N$ in which the $i$-th element has been observed $n_i$ times, then we can estimate $\kappa$ without bias using the following estimator \citep{simpson1949measurement}:
\begin{equation}
    \hat{\kappa} = \sum_i \frac{n_i (n_i-1)}{N (N-1)} = \sum_i \binom{n_i}{2} \Big/ \binom{N}{2}.
\end{equation}
For a finite sample, the pair counts can be written in terms of the contingency table $n_{ck}$ and marginals $n_c$, $n_k$, which in turn can be written in terms of $\hat \kappa$,
\begin{align}
    \mathrm{TP} &= \sum_{c,k} \binom{n_{ck}}{2} = \binom{N}{2} \, \hat{\kappa}(C,K), \\
    \mathrm{FP} &= \sum_{k} \binom{n_{k}}{2} - \mathrm{TP} = \binom{N}{2} \, \bigl(\hat{\kappa}(K) - \hat{\kappa}(C,K)\bigr), \\
    \mathrm{FN} &= \sum_{c} \binom{n_{c}}{2} - \mathrm{TP} = \binom{N}{2} \, \bigl(\hat{\kappa}(C) - \hat{\kappa}(C,K)\bigr)\\
    \mathrm{TN} &= \binom{N}{2} - \mathrm{TP} - \mathrm{FP} - \mathrm{FN}.
\end{align}

\subsection{Normalization and equivalence to TNR/TPR}

The maximal number of false positive pairs is attained when all objects are assigned to a single cluster. In that case every negative pair is predicted positive, hence
\begin{equation}
    \max_{\tilde K}\mathrm{FP}=\mathrm{FP}+\mathrm{TN}.
\end{equation}
Similarly, the maximal number of false negative pairs is attained when each object forms its own singleton cluster. In that case every positive pair is predicted negative, hence
\begin{equation}
    \max_{\tilde K}\mathrm{FN}=\mathrm{FN}+\mathrm{TP}.
\end{equation}
Therefore the normalized complement scores are
\begin{align}
    h_2(C, K)
    &=1-\frac{\mathrm{FP}}{\max_{\tilde K}\mathrm{FP}}
      =1-\frac{\mathrm{FP}}{\mathrm{FP}+\mathrm{TN}}\\
    &  =\frac{\mathrm{TN}}{\mathrm{TN}+\mathrm{FP}}
      =\mathrm{TNR}, \\
    p_2(C, K)
    &=1-\frac{\mathrm{FN}}{\max_{\tilde K}\mathrm{FN}}
      =1-\frac{\mathrm{FN}}{\mathrm{FN}+\mathrm{TP}}\\
    &  =\frac{\mathrm{TP}}{\mathrm{TP}+\mathrm{FN}}
      =\mathrm{TPR}.
\end{align}
Thus, in the pair-counting formulation, homogeneity and parsimony are mathematically equivalent to the empirical unbiased estimators of the true negative rate (TNR, specificity) and true positive rate (TPR, sensitivity) of the induced binary classification problem on pairs. 

\section{The Q-measure as a scalar clustering quality metric} \label{app_scalarizations}

While we emphasize the advantages of two-objective formulations of external clustering validation, our homogeneity and parsimony scores can also be combined into a single scalar score for applications requiring a single metric. We define the Q-measure of clustering quality as a weighted harmonic mean of homogeneity and parsimony:
\begin{equation}
    Q_\beta(C, K) = \frac{(1 + \beta) h p}{\beta h + p},
\end{equation}
where $\beta$ controls the relative importance of homogeneity and parsimony, $\lim_{\beta \to 0} Q_\beta = h$ and $\lim_{\beta \to \infty} Q_\beta = p$.
Our definition of the Q-measure is inspired by the V-measure \citep{rosenberg2007v}, a widely used scalar score based on homogeneity and completeness, $V_\beta = [(1 + \beta) h c]/[\beta h + c]$, but it replaces completeness with parsimony.

Table~\ref{tab:v-q-measures} compares how Q-measure and V-measure rank the clusterings from Figure~\ref{fig_completeness_example} at $\beta=1$. Both measures agree on ranking clustering $\mathcal{K}_c$ best, while clustering $\mathcal{K}_a$, which puts all objects into a single cluster, is ranked worst. However, Q-measure and V-measure differ in their ranking of clusterings $\mathcal{K}_d$ and $\mathcal{K}_b$. V-measure ranks clustering $\mathcal{K}_d$, which puts each point in its own cluster, over clustering $\mathcal{K}_b$. Q-measure ranks $\mathcal{K}_b$ higher and assigns a score of zero to $\mathcal{K}_d$. This better matches intuition, since clustering $\mathcal{K}_d$ is the least parsimonious. Q-measure can thus serve as an alternative scalarization to V-measure, which penalizes overclustering more strongly.
The harmonic mean provides a natural criterion for problems where overall solution quality is bottlenecked by poor performance on any of the competing criteria.
In practice, optimizing $Q_\beta$ over a range of $\beta$ yields solutions along the Pareto frontier of the homogeneity--parsimony trade-off, from which one can select a clustering suited to the specific application.

\begin{table}[hbt]
    \centering
    \caption{Comparison of the Q-measure and V-measure for the clusterings ($\mathcal{K}_a$--$\mathcal{K}_d$) shown in Fig.~\ref{fig_completeness_example} (at $\beta=1$ for both scores).}
    \label{tab:v-q-measures}
    \begin{ruledtabular}
    \begin{tabular}{lrr}
        Clustering & Q-measure & V-measure \\
        \midrule
        $\mathcal{K}_a$ & 0.00 & 0.00 \\
        $\mathcal{K}_b$ & 0.30 & 0.23 \\
        $\mathcal{K}_c$ & 0.83 & 0.81 \\
        $\mathcal{K}_d$ & 0.00 & 0.56 \\
    \end{tabular}
    \end{ruledtabular}
\end{table}

\end{document}